\documentclass[11pt]{article}
\usepackage[T1]{fontenc}
\usepackage{lmodern}
\usepackage[margin=0.85in]{geometry}
\usepackage{amsmath,amssymb,amsthm}
\usepackage{microtype}
\usepackage{booktabs,tabularx,array}
\usepackage[hidelinks]{hyperref}
\newtheorem{proposition}{Proposition}
\newcommand{\R}{\mathbb R}
\newcommand{\E}{\mathbb E}
\newcommand{\normtwo}[1]{\lVert#1\rVert_2}
\newcommand{\normop}[1]{\lVert#1\rVert_{\mathrm{op}}}
\newcommand{\normnuc}[1]{\lVert#1\rVert_*}
\DeclareMathOperator{\Var}{Var}
\DeclareMathOperator{\Cov}{Cov}
\newcommand{\ip}[2]{\langle #1,#2\rangle_{\mathrm F}}
\title{Notes on Uniform Policy Score Bounds and Lipschitz Continuity for Truncated Gaussian, Product Cauchy, Boltzmann, and Softmax Policies}
\author{Yanjie Dong @ Xi'an Jiaotong University}
\date{}
\begin{document}
\maketitle
\vspace{-1.5em}

\begin{abstract}
We establish uniform score bounds and Lipschitz continuity for truncated Gaussian, Cauchy, Boltzmann, and softmax policies under explicitly stated parameterizations.
\end{abstract}

\section{The two regularity requirements}
For a policy with a matrix parameter $W$, define its score by
$S_W(s,a):=\nabla_W\log\pi_W(a\mid s)$, where the gradient uses the
Frobenius inner product $\ip{U}{V}=\operatorname{tr}(U^\top V)$.
We seek constants $R_\pi,L_\pi<\infty$ such that
\begin{align}
\normnuc{S_W(s,a)}&\le R_\pi,\label{eq:target-bounded}\\
\normnuc{S_W(s,a)-S_{W'}(s,a)}
&\le L_\pi\normop{W-W'} .\label{eq:target-lipschitz}
\end{align}

Both inequalities must hold uniformly over states, supported actions,
and the full stated parameter space. The score uses the \emph{nuclear norm};
the parameter difference uses the \emph{operator (spectral) norm}.
For a row or column matrix both norms equal its Euclidean norm; for a
scalar they equal absolute value. The dimensions of $W$ are specified
separately for each policy. The symbol $B$ is reused for the applicable
uniform feature bound in each policy class.

The following sufficient conditions verify both requirements without
restricting the parameter matrix to a compact set. They are claims about
particular parameterizations, not about every policy bearing the same
family name. Throughout, features, supports, scales, and temperatures are
independent of $W$ unless expressly stated otherwise. Softmax is the
finite-action instance of a Boltzmann policy, not a competing category.
The following sections establish the bounds for each policy class and
compare their assumptions.

\section{Truncated Gaussian with a linear mean}
Let $\mathcal A\subset\R^m$ be a fixed, bounded, Lebesgue-measurable
set of positive measure, with Euclidean diameter
\[
D:=\sup_{a,b\in\mathcal A}\normtwo{a-b}<\infty.
\]

Let $\Sigma\in\R^{m\times m}$ be fixed, symmetric, and positive definite.
The feature map
$\phi:\mathcal S\to\R^d$ is independent of the policy parameter and satisfies
$\sup_{s\in\mathcal S}\normtwo{\phi(s)}\le B<\infty$.

For $W\in\R^{m\times d}$, set $\mu_W(s)=W\phi(s)$ and define the
Lebesgue density
\begin{equation}\label{eq:density}
\begin{split}
\pi_W(a| s)
&=\frac{\mathbf 1_{\mathcal A}(a)
	\exp\!\left[-\tfrac12(a-\mu_W(s))^\top \Sigma^{-1}(a-\mu_W(s))\right]}
{Z(\mu_W(s))}, \\
Z(\mu) &:=\int_{\mathcal A}
\exp\!\left[-\tfrac12(x-\mu)^\top \Sigma^{-1}(x-\mu)\right]\,dx.
\end{split}
\end{equation}

This is the Gaussian law conditioned on membership in $\mathcal A$.
No convexity of $\mathcal A$ is assumed.
The normalizer satisfies $0<Z(\mu)<\infty$ for every $\mu\in\R^m$.

Let $\pi_\mu$ denote the density in \eqref{eq:density} with mean parameter
$\mu$, and write
\begin{equation}\label{eq:moments}
m(\mu):=\E_\mu[X],
\qquad
C(\mu):=\E_\mu[(X-m(\mu))(X-m(\mu))^\top].
\end{equation}

The score is the matrix gradient
$S_W(s,a):=\nabla_W\log\pi_W(a\mid s)\in\R^{m\times d}$,
defined through the Frobenius inner product. All score claims below concern
$a\in\mathcal A$; the logarithm of the density is not finite outside this set.
We use the nuclear norm $\normnuc{\cdot}$ for score matrices, the operator
(spectral) norm $\normop{\cdot}$ for parameter matrices, and the Euclidean
norm $\normtwo{\cdot}$ for vectors.

\begin{proposition}[Uniform boundedness and global score Lipschitz continuity]
\label{prop:score}
Under the preceding assumptions, for every $W,W'\in\R^{m\times d}$,
$s\in\mathcal S$, and $a\in\mathcal A$,
\begin{align}
S_W(s,a)
&=\Sigma^{-1}\bigl(a-m(\mu_W(s))\bigr)\phi(s)^\top,
\label{eq:score}\\
\normnuc{S_W(s,a)}
&\le R_\pi,
\qquad R_\pi:=BD\normop{\Sigma^{-1}},
\label{eq:bounded}\\
\normnuc{S_W(s,a)-S_{W'}(s,a)}
&\le L_\pi\normop{W-W'},
\qquad L_\pi:=\frac{B^2D^2}{4}\normop{\Sigma^{-1}}^{\,2}.
\label{eq:lipschitz}
\end{align}

These are sufficient constants, with no claim of sharpness. In particular,
no bound on $\normop{W}$ or on $\normtwo{\mu_W(s)}$ is required.
\end{proposition}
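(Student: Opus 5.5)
The plan is to reduce everything to the mean parameter $\mu=\mu_W(s)=W\phi(s)$, compute the gradient in $\mu$ via exponential-family identities, and then transport bounds back to $W$ by the chain rule. First I write
\[
\log\pi_\mu(a)=-\tfrac12(a-\mu)^\top\Sigma^{-1}(a-\mu)-\log Z(\mu),\qquad a\in\mathcal A .
\]
Because $\mathcal A$ is bounded, the integrand of $Z$ and its $\mu$-derivatives are dominated uniformly on compact $\mu$-neighbourhoods. So differentiation under the integral is justified, and it gives $\nabla_\mu\log Z(\mu)=\Sigma^{-1}(m(\mu)-\mu)$. Hence $\nabla_\mu\log\pi_\mu(a)=\Sigma^{-1}(a-m(\mu))$.

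Since $\mu=W\phi(s)$ is linear in $W$, we have $\ip{\nabla_W f}{H}=\langle\nabla_\mu f,H\phi(s)\rangle$ for every direction $H$. This yields the rank-one formula \eqref{eq:score}.

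For \eqref{eq:bounded}, the nuclear norm of a rank-one matrix $uv^\top$ equals $\normtwo{u}\normtwo{v}$. I would bound $\normtwo{\phi(s)}\le B$ and use the operator norm of $\Sigma^{-1}$. For the remaining factor,
\[
\normtwo{a-m(\mu)}=\normtwo{\E_\mu[a-X]}\le\E_\mu\normtwo{a-X}\le D,
\]
since $X\in\mathcal A$ almost surely. This step needs no convexity of $\mathcal A$, and it gives $R_\pi=BD\normop{\Sigma^{-1}}$.

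For \eqref{eq:lipschitz}, note that $S_W-S_{W'}=-\Sigma^{-1}\bigl(m(\mu)-m(\mu')\bigr)\phi(s)^\top$, with $\mu'=W'\phi(s)$. This is again rank one, so it suffices to show that $m$ is globally Lipschitz. Rewriting the density as proportional to $\mathbf 1_{\mathcal A}(x)e^{-x^\top\Sigma^{-1}x/2}\exp(x^\top\Sigma^{-1}\mu)$ exhibits an exponential family with natural parameter $\eta=\Sigma^{-1}\mu$. The standard cumulant identity, again justified by dominated differentiation on bounded support, gives the Jacobian $Dm(\mu)=C(\mu)\Sigma^{-1}$.

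By the mean value inequality along the segment from $\mu'$ to $\mu$,
\[
\normtwo{m(\mu)-m(\mu')}\le\sup_\nu\normop{C(\nu)}\,\normop{\Sigma^{-1}}\,\normtwo{\mu-\mu'},
\]
and $\normtwo{\mu-\mu'}\le\normop{W-W'}\,B$.

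The key remaining estimate, and the one I expect to need the most care, is the uniform bound $\normop{C(\nu)}\le D^2/4$ for all $\nu$. For a unit vector $u$, $u^\top C(\nu)u=\Var_\nu(u^\top X)$. The scalar $u^\top X$ takes values in an interval of length at most $\sup_{a,b\in\mathcal A}|u^\top(a-b)|\le D$. Popoviciu's inequality then bounds this variance by $D^2/4$, uniformly in $\nu$, and $C(\nu)$ is positive semidefinite.

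Combining the pieces gives
\[
\normnuc{S_W-S_{W'}}\le\normop{\Sigma^{-1}}\cdot\tfrac{D^2}{4}\normop{\Sigma^{-1}}\,B\normop{W-W'}\cdot B,
\]
which is exactly $L_\pi\normop{W-W'}$. Nowhere did we use a bound on $\normop{W}$ or on $\mu$, because every estimate is uniform in the mean parameter.
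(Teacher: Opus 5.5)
Your proposal is correct and follows essentially the same route as the paper. The shared steps are the score via $\nabla_\mu\log Z(\mu)=\Sigma^{-1}(m(\mu)-\mu)$, the Jensen bound $\normtwo{a-m(\mu)}\le D$, the exponential-family Jacobian $D_\mu m=C(\mu)\Sigma^{-1}$, the Popoviciu-type bound $C(\mu)\preceq (D^2/4)I_m$, and the rank-one nuclear-norm identity. The only difference is cosmetic: you apply the mean value inequality to $m$ along the $\mu$-segment and use the rank-one form of $S_W-S_{W'}$ directly, while the paper bounds $\normnuc{D_WS_W(s,a)[U]}$ and integrates along the $W$-segment; both give the same $L_\pi$.
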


\begin{proof}
Fix a state $s$ and abbreviate $\phi=\phi(s)$.
On a compact set of mean parameters, the Gaussian kernel and its derivatives
are uniformly bounded on the bounded set $\mathcal A$. Since $\mathcal A$
has finite measure, differentiation under the integral is justified by
dominated convergence. Positivity of $Z$ then permits differentiation of
normalized moments and logarithms at every mean parameter.

We first differentiate the normalizer. The mean gradient of the unnormalized log kernel is $\Sigma^{-1}(x-\mu)$. Hence, we obtain
\begin{align}
\nabla_\mu Z(\mu)
&=\int_{\mathcal A}\Sigma^{-1}(x-\mu)
 \exp\!\left[-\tfrac12(x-\mu)^\top \Sigma^{-1}(x-\mu)\right]\,dx,
\nonumber\\
\nabla_\mu\log Z(\mu)
&=\Sigma^{-1}(m(\mu)-\mu).
\label{eq:normalizer}
\end{align}

For $a\in\mathcal A$, the support indicator is constant with respect to
the parameter. Subtracting \eqref{eq:normalizer} from the gradient of the
log kernel gives
\begin{equation}\label{eq:mean-score}
\nabla_\mu\log\pi_\mu(a)
=\Sigma^{-1}(a-\mu)-\Sigma^{-1}(m(\mu)-\mu)=\Sigma^{-1}(a-m(\mu)).
\end{equation}

The chain rule for $\mu=W\phi$ proves \eqref{eq:score}.
Thus the truncated mean, rather than the untruncated mean parameter,
centers the score.

We next differentiate the truncated mean. For a direction $h\in\R^m$, differentiation of the normalized expectation,
using \eqref{eq:mean-score}, gives
\begin{align}
D_\mu m(\mu)[h]
&=\E_\mu\!\left[X(X-m(\mu))^\top\right]\Sigma^{-1}h
=C(\mu)\Sigma^{-1}h,
\label{eq:mean-derivative}\\
D_W S_W(s,a)[U]
&=-\Sigma^{-1} C(\mu_W(s))\Sigma^{-1}\,U\phi\phi^\top,
\qquad U\in\R^{m\times d}.
\label{eq:score-derivative}
\end{align}

The matrix order in these identities matters:
$D_\mu m(\mu)=C(\mu)\Sigma^{-1}$.
In \eqref{eq:score-derivative}, $\Sigma^{-1}C\Sigma^{-1}$ is an $m\times m$ matrix,
$U\phi$ is an $m$-vector, and the output is an $m\times d$ matrix.
The mean/covariance derivative identity is also the standard
exponential-family identity under the natural parameter $\Sigma^{-1}\mu$;
see Fithian~\cite{fithian2023}. The norm constants below are derived here.

To control the covariance, we use the support diameter. If a scalar random variable $Y$ lies in $[\alpha,\beta]$ and
$\bar y=\E Y$, then
\[
0\le\E[(Y-\alpha)(\beta-Y)]
=(\beta-\bar y)(\bar y-\alpha)-\Var(Y).
\]

Consequently,
\begin{equation}\label{eq:scalar-variance}
\Var(Y)\le(\beta-\bar y)(\bar y-\alpha)
\le\frac{(\beta-\alpha)^2}{4}.
\end{equation}

For $v\in\R^m$, the range of $v^\top X$ has width at most
$D\normtwo{v}$. Applying \eqref{eq:scalar-variance} with the infimum
and supremum of this range yields
\begin{equation}\label{eq:covariance}
v^\top C(\mu)v=\Var_\mu(v^\top X)
\le\frac{D^2}{4}\normtwo{v}^{\,2},
\qquad
0\preceq C(\mu)\preceq\frac{D^2}{4}I_m.
\end{equation}

This argument does not require the extremal values to be attained.

It remains to bound the score and its derivative in the required norm. For vectors $u$ and $v$, the matrix $uv^\top$ has at most one nonzero
singular value, equal to $\normtwo{u}\normtwo{v}$. Therefore
\begin{equation}\label{eq:rank-one}
\normnuc{uv^\top}=\normtwo{u}\normtwo{v}.
\end{equation}

For every $a\in\mathcal A$,
\[
\normtwo{a-m(\mu)}
=\normtwo{\E_\mu[a-X]}
\le\E_\mu\normtwo{a-X}\le D.
\]

This remains valid for a nonconvex support, even if its mean is not in
$\mathcal A$. Eqs. \eqref{eq:score} and \eqref{eq:rank-one} now give
\[
\normnuc{S_W(s,a)}
=\normtwo{\Sigma^{-1}(a-m(\mu_W(s)))}\normtwo{\phi}
\le BD\normop{\Sigma^{-1}},
\]
which proves \eqref{eq:bounded}.

Similarly, \eqref{eq:score-derivative}--\eqref{eq:rank-one} imply
\begin{align}
\normnuc{D_W S_W(s,a)[U]}
&=\normtwo{\Sigma^{-1}C(\mu_W(s))\Sigma^{-1}\,U\phi}\normtwo{\phi}
\nonumber\\
&\le\normop{\Sigma^{-1}}^{\,2}\normop{C(\mu_W(s))}
       \normop{U}\normtwo{\phi}^{\,2}
\nonumber\\
&\le\frac{B^2D^2}{4}\normop{\Sigma^{-1}}^{\,2}\normop{U}.
\label{eq:derivative-bound}
\end{align}

Finally, let $W_t=W'+t(W-W')$. The fundamental theorem of calculus
and the triangle inequality for the nuclear norm give
\begin{align*}
S_W(s,a)-S_{W'}(s,a)
&=\int_0^1 D_W S_{W_t}(s,a)[W-W']\,dt,\\
\normnuc{S_W(s,a)-S_{W'}(s,a)}
&\le\int_0^1 L_\pi\normop{W-W'}\,dt
=L_\pi\normop{W-W'}.
\end{align*}

All bounds are uniform in $s$, $a$, and the endpoints of this segment,
which proves \eqref{eq:lipschitz}.
\end{proof}

For a scalar specialization, let $\mathcal A=[\ell,u]\subset\R$, with $\ell<u$, and
$\Sigma=\sigma^2>0$. Write $W=w^\top\in\R^{1\times d}$, so that
$\mu_W(s)=w^\top\phi(s)$, and denote the truncated mean by
$m_\sigma(\mu)$. Then
\[
S_W(s,a)=\frac{a-m_\sigma(\mu_W(s))}{\sigma^2}\phi(s)^\top,
\qquad
\frac{d m_\sigma(\mu)}{d\mu}
=\frac{\Var_\mu(X)}{\sigma^2}.
\]

Proposition~\ref{prop:score} supplies the constants
\begin{equation}\label{eq:scalar-constants}
R_\pi=\frac{B(u-\ell)}{\sigma^2},
\qquad
L_\pi=\frac{B^2(u-\ell)^2}{4\sigma^4}.
\end{equation}

For $\mathcal A=[-1,1]$ and $B\le1$, valid choices are
\[
R_\pi=\frac{2}{\sigma^2},
\qquad L_\pi=\frac{1}{\sigma^4}.
\]

For row matrices, both the nuclear and operator norms equal the Euclidean norm of the row, consistently with the general statement. 
The proof uses a support and covariance independent of $W$, and a linear
mean with a fixed, uniformly bounded feature map. A nonlinear neural mean
requires additional bounds on its first and second parameter derivatives
to obtain analogous uniform score bounds. A learned covariance requires
additional derivative controls and nondegeneracy conditions; the present
constants do not cover that setting.
Changing the support with $W$
also invalidates the differentiation argument given here without further
analysis. In the scalar example, clipping an untruncated Gaussian sample
creates atoms at the endpoints; it is not the conditioned density in
\eqref{eq:density}. These sufficient bounds concern the stated policy class
and do not establish score regularity for an arbitrary neural policy.

\section{Product Cauchy with a linear location}
Let the action space be $\R^m$, and let the coordinates be conditionally
independent given the state. Fix scales $\gamma_1,\ldots,\gamma_m>0$.
As before, take a fixed feature map with $\normtwo{\phi(s)}\le B$,
$W\in\R^{m\times d}$, and $\mu_W(s)=W\phi(s)$. Define
\begin{equation}\label{eq:cauchy-density}
\pi_W(a\mid s)=\prod_{i=1}^m
\frac{\gamma_i}{\pi\bigl(\gamma_i^2+(a_i-\mu_{W,i}(s))^2\bigr)},
\qquad a\in\R^m.
\end{equation}

Each factor is the usual location--scale Cauchy density
\cite{nist-cauchy}. Here $\mu_W$ is a \emph{location}, not an expectation:
a Cauchy variable has no finite mean. No action moment is used below. These bounds are uniform over all $a\in\R^m$ and $W\in\R^{m\times d}$.

\begin{proposition}[Fixed-scale product Cauchy]
\label{prop:cauchy}
For \eqref{eq:cauchy-density}, the score is
\begin{equation}\label{eq:cauchy-score}
S_W(s,a)=
\left(\frac{2(a_i-(W\phi(s))_i)}
{\gamma_i^2+(a_i-(W\phi(s))_i)^2}\right)_{i=1}^m\phi(s)^\top,
\end{equation}
where the parenthesized entries form a column vector.
The two regularity requirements hold with
\begin{equation}\label{eq:cauchy-constants}
R_\pi=B\left(\sum_{i=1}^m\gamma_i^{-2}\right)^{1/2},
\qquad L_\pi=\frac{2B^2}{\min_i\gamma_i^2}.
\end{equation}

\end{proposition}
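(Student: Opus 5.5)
The plan is a direct computation rather than a moment argument. No expectations appear, and the score is an explicit rank-one matrix. I would reduce everything to two elementary scalar bounds on the function
\[
g_\gamma(x):=\frac{2x}{\gamma^2+x^2},\qquad x\in\R,
\]
and then reuse the rank-one identity \eqref{eq:rank-one} from the proof of Proposition~\ref{prop:score}.

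\textbf{Score formula.} Taking logarithms in \eqref{eq:cauchy-density} gives
\[
\log\pi_W(a\mid s)=\sum_i\bigl[\log(\gamma_i/\pi)-\log\bigl(\gamma_i^2+(a_i-\mu_{W,i}(s))^2\bigr)\bigr].
\]
This is smooth in $W$ for every $a\in\R^m$, and the density is everywhere positive, so no support issue arises. Differentiating in the location coordinate $\mu_i$ gives $\partial_{\mu_i}\log\pi=g_{\gamma_i}(a_i-\mu_i)$. The chain rule through $\mu=W\phi$ (as in the Gaussian case) then gives \eqref{eq:cauchy-score}:
\[
S_W(s,a)=v_W\phi^\top,\qquad v_W:=\bigl(g_{\gamma_i}(a_i-(W\phi)_i)\bigr)_i .
\]

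\textbf{Boundedness.} By AM--GM, $\gamma^2+x^2\ge 2\gamma|x|$, so $|g_\gamma(x)|\le 1/\gamma$ for all real $x$. Hence $\normtwo{v_W}\le(\sum_i\gamma_i^{-2})^{1/2}$. By \eqref{eq:rank-one},
\[
\normnuc{S_W}=\normtwo{v_W}\normtwo{\phi}\le R_\pi .
\]

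\textbf{Lipschitz continuity.} Differentiating gives
\[
g_\gamma'(x)=\frac{2(\gamma^2-x^2)}{(\gamma^2+x^2)^2}.
\]
Since $|\gamma^2-x^2|\le\gamma^2+x^2$, we get $|g'_\gamma(x)|\le 2/(\gamma^2+x^2)\le 2/\gamma^2$. The mean value theorem, applied coordinatewise with $a$ fixed, then yields
\[
\bigl|(v_W-v_{W'})_i\bigr|\le\frac{2}{\gamma_i^2}\bigl|((W-W')\phi)_i\bigr|.
\]
It follows that
\[
\normtwo{v_W-v_{W'}}\le\frac{2}{\min_i\gamma_i^2}\normtwo{(W-W')\phi}\le\frac{2B}{\min_i\gamma_i^2}\normop{W-W'} .
\]
Since $S_W-S_{W'}=(v_W-v_{W'})\phi^\top$ is again rank one, \eqref{eq:rank-one} gives
\[
\normnuc{S_W-S_{W'}}\le\frac{2B^2}{\min_i\gamma_i^2}\normop{W-W'},
\]
which is the stated $L_\pi$.

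There is no real obstacle here. The only points needing care are that the bounds on $g_\gamma$ and $g_\gamma'$ are global in $x$, which makes the result uniform over all $a\in\R^m$ and all $W$. The one easy-to-miss step is the inequality $|\gamma^2-x^2|\le\gamma^2+x^2$, which is what yields the clean factor $2/\gamma^2$.
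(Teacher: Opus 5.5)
Your proof is correct and follows the paper's argument. It uses the same score computation, the same AM--GM bound $|g_\gamma|\le 1/\gamma$, the same derivative bound $|g_\gamma'|\le 2/(\gamma^2+x^2)\le 2/\gamma^2$, and the rank-one identity. The one difference is cosmetic: you apply the mean value theorem coordinatewise to the rank-one difference $(v_W-v_{W'})\phi^\top$, whereas the paper bounds $D_WS_W(s,a)[U]$ in nuclear norm and integrates along the segment from $W'$ to $W$; with $\phi$ fixed at a given state, the two are equivalent.
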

\begin{proof}
Fix $s$, abbreviate $\phi=\phi(s)$, and write
$r_i=a_i-\mu_i$ with $\mu=W\phi$ within this proof.
Differentiating the log density with respect to each location and applying
the chain rule gives \eqref{eq:cauchy-score}.

To bound the score, note that
$2\gamma_i|r_i|\le\gamma_i^2+r_i^2$. The rank-one identity therefore gives
\[
\normnuc{S_W(s,a)}
=\normtwo{\phi}\left(\sum_i
\left(\frac{2r_i}{\gamma_i^2+r_i^2}\right)^2\right)^{1/2}
\le B\left(\sum_i\gamma_i^{-2}\right)^{1/2}.
\]

For Lipschitz continuity, the derivative of each location-score component is
\begin{equation}\label{eq:cauchy-derivative}
\frac{\partial}{\partial\mu_i}
\frac{2(a_i-\mu_i)}{\gamma_i^2+(a_i-\mu_i)^2}
=-\frac{2(\gamma_i^2-r_i^2)}{(\gamma_i^2+r_i^2)^2}.
\end{equation}

Its absolute value is at most $2/(\gamma_i^2+r_i^2)\le2/\gamma_i^2$.
Thus, for any $U\in\R^{m\times d}$,
\begin{align*}
D_WS_W(s,a)[U]
&=\left(-\frac{2(\gamma_i^2-r_i^2)(U\phi)_i}
{(\gamma_i^2+r_i^2)^2}\right)_{i=1}^m\phi^\top,\\
\normnuc{D_WS_W(s,a)[U]}
&\le\frac{2}{\min_i\gamma_i^2}\normtwo{U\phi}\normtwo{\phi}
\le\frac{2B^2}{\min_i\gamma_i^2}\normop{U}.
\end{align*}

The output is an $m\times d$ matrix of rank at most one.
Integrating this uniform derivative bound along the segment from $W'$ to
$W$, as in Proposition~\ref{prop:score}, proves the Lipschitz inequality.
\end{proof}

For a scalar action, write $W=w^\top\in\R^{1\times d}$ and use one fixed
scale $\gamma$. Then
\[
S_W(s,a)=\frac{2(a-w^\top\phi(s))}
{\gamma^2+(a-w^\top\phi(s))^2}\phi(s)^\top,
\qquad R_\pi=\frac{B}{\gamma},\quad
L_\pi=\frac{2B^2}{\gamma^2}.
\]

With $m$ coordinates and a common scale, the bounded-score constant is
$B\sqrt m/\gamma$, whereas the Lipschitz constant remains
$2B^2/\gamma^2$ under the stated matrix norms.
Both calculations concern an \emph{independent product} of scalar Cauchy
laws. An elliptical multivariate Cauchy is instead a multivariate Student
law with one degree of freedom. When $m>1$,
even a diagonal shape matrix does not make its density factor into
independent scalar Cauchy densities. Its score must be derived from its
own density; the product constants are not asserted for that family.
The rational expressions in \eqref{eq:cauchy-score} and
\eqref{eq:cauchy-derivative} are bounded even when actions are unbounded.
Thus this policy satisfies both score requirements. This does not supply
finite action moments, bounded rewards, critic-feature moments, or any
other reinforcement-learning assumption; those require separate checks.

\section{Boltzmann policies with bounded matrix features}
For each state $s$, let $\nu_s$ be a parameter-independent positive measure
on an action space $\mathcal A$ with
$0<\nu_s(\mathcal A)<\infty$. Fix a temperature $\tau>0$ and a
measurable, parameter-independent feature matrix
$F(s,a)\in\R^{m\times d}$ satisfying
\begin{equation}\label{eq:boltzmann-feature-bound}
\sup_{s,a}\normnuc{F(s,a)}\le B<\infty.
\end{equation}

For $W\in\R^{m\times d}$, define the density with respect to $\nu_s$ by
\begin{align}
\pi_W(a\mid s)&=
\frac{\exp\!\left(\ip{W}{F(s,a)}/\tau\right)}{Z_s(W)},
\label{eq:boltzmann-density}\\
Z_s(W)&=\int_{\mathcal A}
\exp\!\left(\ip{W}{F(s,x)}/\tau\right)\nu_s(dx).
\label{eq:boltzmann-normalizer}
\end{align}

The exponent is a preference; under the negative-energy convention,
the energy is $-\ip{W}{F}$. A carrier density can be incorporated into
$\nu_s$ if its integral is positive and finite. Thus the action space
need not be bounded or discrete.

\begin{proposition}[Linear Boltzmann preference]
\label{prop:boltzmann}
Under these assumptions, \eqref{eq:boltzmann-density} is normalizable for
every $W$, and its score is
\begin{equation}\label{eq:boltzmann-score}
S_W(s,a)=\frac{F(s,a)-\E_W[F(s,A)]}{\tau}.
\end{equation}

Here $\E_W$ denotes expectation over an action drawn from the current
policy at the fixed state $s$. The two regularity requirements hold with
\begin{equation}\label{eq:boltzmann-constants}
R_\pi=\frac{2B}{\tau},\qquad
L_\pi=\frac{B^2}{\tau^2}.
\end{equation}

The score matrix need not have rank one.
\end{proposition}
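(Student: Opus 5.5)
First I would check normalizability. Duality of the nuclear and operator norms gives $|\ip{W}{F(s,x)}|\le\normop{W}\normnuc{F(s,x)}\le B\normop{W}$. Hence the integrand in \eqref{eq:boltzmann-normalizer} lies between $e^{-B\normop{W}/\tau}$ and $e^{B\normop{W}/\tau}$. Since $0<\nu_s(\mathcal A)<\infty$, this yields $0<Z_s(W)<\infty$ for every $W$.

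The same envelope gives differentiability. On any bounded set of parameters, the integrand and its $W$-derivative $F e^{\ip{W}{F}/\tau}/\tau$ are dominated by the constant $(1+B/\tau)e^{BR/\tau}$, which is $\nu_s$-integrable. Dominated convergence then justifies differentiating under the integral, exactly as in the proof of Proposition~\ref{prop:score}. We obtain $\nabla_W\log Z_s(W)=\E_W[F(s,A)]/\tau$. Subtracting this from the gradient $F(s,a)/\tau$ of the log kernel gives \eqref{eq:boltzmann-score}, with $\E_W$ taken in the Bochner/entrywise sense. That expectation is finite because $\normnuc{F}\le B$.

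For the bounded score, I would use convexity of the nuclear norm (Jensen):
\[
\normnuc{\E_W F}\le\E_W\normnuc{F}\le B .
\]
The triangle inequality then gives $\normnuc{S_W}\le 2B/\tau$. No rank-one identity is needed, which matches the remark that the score need not be rank one.

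For Lipschitz continuity, I differentiate the normalized expectation in direction $U$. With $G:=F(s,A)-\E_W F(s,A)$, this gives
\[
D_W S_W(s,a)[U]=-\tfrac1\tau D_W\E_W[F][U]
=-\tfrac1{\tau^2}\E_W\!\bigl[G\,\ip{U}{G}\bigr],
\]
a covariance operator applied to $U$. The naive bound uses $\normnuc{G}\le 2B$ and $|\ip{U}{G}|\le\normop{U}\normnuc{G}$. It gives $4B^2/\tau^2$, which is too large by a factor of $4$, so this is the main obstacle.

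The fix is to avoid centering inside both factors. Since $\E_W G=0$, we have $\E_W[G\ip{U}{G}]=\E_W[G\,\ip{U}{F}]$. Writing $c:=\E_W\ip{U}{F}$, this also equals $\E_W[(F-\E_WF)(\ip{U}{F}-c)]$, a genuine covariance $\Cov(F,\ip{U}{F})$. I would then use the symmetrization identity
\[
\Cov(F,\ip{U}{F})=\tfrac12\E[(F-F')\,\ip{U}{F-F'}],
\]
with $F'$ an independent copy. This gives
\[
\normnuc{\cdot}\le\tfrac12\E\bigl[\normnuc{F-F'}\cdot\normop{U}\normnuc{F-F'}\bigr]\le\tfrac12(2B)^2\normop{U}=2B^2\normop{U},
\]
i.e.\ only $2B^2/\tau^2$, still a factor $2$ off.

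To remove that factor, I would try a dual approach. Pair the covariance with a test matrix $V$, $\normop{V}\le1$. The resulting quantity $\Cov(\ip{V}{F},\ip{U}{F})$ is bounded by Cauchy--Schwarz by $\sqrt{\Var\ip{V}{F}\,\Var\ip{U}{F}}$. Each scalar $\ip{U}{F}$ lies in an interval of width $2B\normop{U}$, so \eqref{eq:scalar-variance} gives $\Var\le B^2\normop{U}^2$. The product is therefore at most $B^2\normop{U}$. Taking the supremum over $V$ yields $\normnuc{D_WS_W[U]}\le B^2\normop{U}/\tau^2$.

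Finally, I integrate along the segment $W_t=W'+t(W-W')$ as in Proposition~\ref{prop:score}. This yields \eqref{eq:boltzmann-constants}, uniformly in $s$, $a$, and $W$.
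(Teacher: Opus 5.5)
Your proof is correct and follows the paper's own argument: the normalizability envelope, the Jensen-plus-triangle score bound, and, after you discard the naive $4B^2/\tau^2$ and symmetrized $2B^2/\tau^2$ attempts, the final dual step (pairing $D_WS_W(s,a)[U]$ with $V$, applying Cauchy--Schwarz to $\Cov(\ip{V}{F},\ip{U}{F})$, bounding each variance on an interval of width $2B\normop{U}$, and taking the supremum over $\normop{V}\le1$) all match the paper. One small gap in the justification: the Lipschitz step differentiates the normalized expectation $\E_W[F]$, so your dominated-convergence envelope must also cover the second-order term $F\ip{U}{F}e^{\ip{W}{F}/\tau}$; it does, with nuclear norm at most $B^2\normop{U}e^{BR/\tau}$ on your bounded set, and the paper states this explicitly by dominating the first two parameter derivatives.
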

\begin{proof}
Nuclear--spectral duality gives
$|\ip{W}{F(s,a)}|\le\normop{W}\normnuc{F(s,a)}$. Hence
\begin{equation}\label{eq:boltzmann-finite}
0<e^{-\normop{W}B/\tau}\nu_s(\mathcal A)
\le Z_s(W)\le e^{\normop{W}B/\tau}\nu_s(\mathcal A)<\infty.
\end{equation}

On every compact set of parameter matrices, the integrand and its first
two parameter derivatives are dominated by constants integrable with
respect to the finite measure $\nu_s$. Differentiation under the integral
is therefore justified. Differentiating the log normalizer yields
$\nabla_W\log Z_s(W)=\E_W[F(s,A)]/\tau$, proving
\eqref{eq:boltzmann-score}. Jensen's inequality and the triangle inequality
give $\normnuc{F(s,a)-\E_W[F(s,A)]}\le 2B$ and the bounded-score claim.

Fix $s$ and abbreviate the random feature matrix by $F=F(s,A)$.
All expectations, variances, and covariances below use the current policy.
Differentiating its normalized expectation in the direction
$U\in\R^{m\times d}$ gives
\begin{align}
D_W\E_W[F][U]
&=\frac{1}{\tau}\E_W[(F-\E_W F)\ip{U}{F-\E_W F}],
\label{eq:boltzmann-mean-derivative}\\
D_WS_W(s,a)[U]
&=-\frac{1}{\tau^2}\E_W[(F-\E_W F)\ip{U}{F-\E_W F}].
\label{eq:boltzmann-score-derivative}
\end{align}

Both outputs have the same $m\times d$ dimensions as $W$.
These are the exponential-family covariance identities
\cite{fithian2023}, here justified directly for bounded features.

To bound the derivative in the required norms, take
$V\in\R^{m\times d}$ and pair \eqref{eq:boltzmann-score-derivative} with $V$:
\begin{equation}\label{eq:boltzmann-dual-cov}
\ip{V}{D_WS_W(s,a)[U]}
=-\frac{1}{\tau^2}\Cov(\ip{V}{F},\ip{U}{F}).
\end{equation}

The scalar $\ip{U}{F}$ lies in
$[-B\normop{U},B\normop{U}]$. Thus \eqref{eq:scalar-variance} gives
$\Var(\ip{U}{F})\le B^2\normop{U}^{\,2}$, and the same argument applies to
$V$. Cauchy--Schwarz for covariance now implies
\[
\bigl|\ip{V}{D_WS_W(s,a)[U]}\bigr|
\le\frac{B^2}{\tau^2}\normop{V}\normop{U}.
\]

Finally, nuclear--spectral duality states that for every real matrix $X$,
\begin{equation}\label{eq:dual-norm}
\normnuc{X}=\sup_{\normop{V}\le1}|\ip{V}{X}|.
\end{equation}

Indeed, if $X=\sum_i\sigma_i p_iq_i^\top$ is a thin singular value
decomposition, then
$|\ip{V}{X}|\le\sum_i\sigma_i|p_i^\top Vq_i|
\le\normop{V}\sum_i\sigma_i$.
Equality is attained by $V=\sum_i p_iq_i^\top$; the zero matrix is immediate.
Taking this supremum proves
\[
\normnuc{D_WS_W(s,a)[U]}
\le\frac{B^2}{\tau^2}\normop{U}.
\]

Integration along the parameter segment proves the global Lipschitz
claim. This argument avoids both a rank assumption and a dimension factor.
\end{proof}

The Lipschitz constant in \eqref{eq:boltzmann-constants} cannot be reduced
using only \eqref{eq:boltzmann-feature-bound}: take a scalar parameter,
two equally weighted actions, and scalar features $F=\pm B$.
At $W=0$, the score derivative is $-B^2/\tau^2$.
Finite counting measure is suitable for a finite action set, while a
probability carrier is suitable on an unbounded continuous action space.
In contrast, with bounded $F$ and infinite-mass Lebesgue measure,
the positive lower bound on the integrand in
\eqref{eq:boltzmann-finite} makes the integral infinite for every finite
$W$. One cannot normalize such a bounded preference on that measure.
Other Boltzmann models with unbounded features or preferences require
their own normalizability and differentiation arguments.
If $W$ and $F$ are column vectors, the inner product and both norms
are Euclidean, so the same constants follow from
$\sup_{s,a}\normtwo{F(s,a)}\le B$.
For scalar parameters and features, all norms reduce to absolute values. A bound
only on the Frobenius norm of a general matrix feature is not the same
hypothesis: converting via
$\normnuc{F}\le\sqrt{\operatorname{rank}(F)}\lVert F\rVert_{\mathrm F}$
can introduce a rank factor in these constants.

\section{Finite-action softmax with linear logits}
Let $\mathcal A=\{1,\ldots,K\}$, let $\phi(s)\in\R^d$ satisfy
$\normtwo{\phi(s)}\le B$, and fix $\tau>0$. With
$W\in\R^{K\times d}$, write
\begin{equation}\label{eq:softmax-density}
p_i=\pi_W(i\mid s)=
\frac{\exp((W\phi(s))_i/\tau)}
{\sum_{j=1}^K\exp((W\phi(s))_j/\tau)}.
\end{equation}

This is Proposition~\ref{prop:boltzmann} with counting measure and
$F(s,i)=e_i\phi(s)^\top$, where $e_i$ is the $i$th coordinate vector in
$\R^K$. The additional structure yields sharper constants.

\begin{proposition}[Linear-logit softmax]
\label{prop:softmax}
For \eqref{eq:softmax-density},
\begin{equation}\label{eq:softmax-score}
S_W(s,a)=\frac{(e_a-p)\phi(s)^\top}{\tau}.
\end{equation}

The two regularity requirements hold with
\begin{equation}\label{eq:softmax-constants}
R_\pi=\frac{\sqrt{2}B}{\tau},\qquad
L_\pi=\frac{B^2}{2\tau^2}.
\end{equation}

These constants are independent of the number of actions. For $K=1$,
the score is identically zero and both constants may instead be zero.
\end{proposition}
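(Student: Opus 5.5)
The plan is to specialize Proposition~\ref{prop:boltzmann} for the score formula and then sharpen both norm bounds using the rank-one structure. With counting measure and $F(s,i)=e_i\phi^\top$ (abbreviating $\phi=\phi(s)$), we have $\ip{W}{F(s,i)}=(W\phi)_i$, so \eqref{eq:boltzmann-density} is exactly \eqref{eq:softmax-density}. Moreover $\E_W[F(s,A)]=\sum_i p_i e_i\phi^\top=p\phi^\top$, and \eqref{eq:boltzmann-score} becomes \eqref{eq:softmax-score}. Because $S_W(s,a)=(e_a-p)\phi^\top/\tau$ is rank one, \eqref{eq:rank-one} gives $\normnuc{S_W(s,a)}=\normtwo{e_a-p}\normtwo{\phi}/\tau$. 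We then compute
\[
\normtwo{e_a-p}^2=(1-p_a)^2+\sum_{i\ne a}p_i^2\le(1-p_a)^2+(1-p_a)^2\le 2,
\]
using $\sum_{i\ne a}p_i^2\le(\sum_{i\ne a}p_i)^2$. This yields $R_\pi=\sqrt2B/\tau$.

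For the Lipschitz constant we differentiate. Since $p$ depends on $W$ only through $z=W\phi/\tau$, the softmax Jacobian gives $D_W p[U]=(\operatorname{diag}(p)-pp^\top)U\phi/\tau$. Hence
\[
D_WS_W(s,a)[U]=-\frac{1}{\tau^2}\,(\operatorname{diag}(p)-pp^\top)U\phi\,\phi^\top,
\]
which is again rank one. By \eqref{eq:rank-one},
\[
\normnuc{D_WS_W(s,a)[U]}\le\tau^{-2}\,\normop{\operatorname{diag}(p)-pp^\top}\,\normop{U}\,B^2.
\]
The key step, and the only real obstacle, is to show $\normop{\operatorname{diag}(p)-pp^\top}\le\tfrac12$ uniformly in $p$ and $K$. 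This matrix is the covariance of the one-hot vector $e_A$, so it is positive semidefinite. For a unit vector $v$,
\[
v^\top(\operatorname{diag}(p)-pp^\top)v=\Var(v_A).
\]
The naive bound from \eqref{eq:scalar-variance} gives only $(\max v-\min v)^2/4\le(2\cdot\max|v_i|)^2/4\le 1$, which is too weak. We therefore refine it. Let $v_i$ and $v_j$ be the extreme coordinates. Then
\[
(v_i-v_j)^2\le 2(v_i^2+v_j^2)\le 2\normtwo{v}^2=2,
\]
and \eqref{eq:scalar-variance} then yields $\Var(v_A)\le\tfrac{2}{4}=\tfrac12$. This gives $L_\pi=B^2/(2\tau^2)$.

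The Lipschitz inequality then follows by integrating along $W_t=W'+t(W-W')$ exactly as in the proof of Proposition~\ref{prop:score}. Every bound above is free of $K$. For $K=1$, we have $p=e_1=e_a$, so the score vanishes identically and both constants may be taken to be zero.
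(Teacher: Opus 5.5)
Your proposal is correct and follows essentially the same route as the paper. Both arguments use the bound $\normtwo{e_a-p}^2\le 2(1-p_a)^2\le 2$ with the rank-one identity, the rank-one derivative $-(\operatorname{diag}(p)-pp^\top)U\phi\phi^\top/\tau^2$, the variance argument with range width at most $\sqrt2\normtwo{v}$ (giving $\normop{\operatorname{diag}(p)-pp^\top}\le\tfrac12$), and segment integration; the only cosmetic difference is that you read the score off Proposition~\ref{prop:boltzmann} with $F(s,i)=e_i\phi(s)^\top$ instead of differentiating the log-softmax directly.
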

\begin{proof}
Differentiating the log probability gives \eqref{eq:softmax-score}.
For any probability vector $p$,
\[
\normtwo{e_a-p}^{\,2}
=(1-p_a)^2+\sum_{i\ne a}p_i^2
\le2(1-p_a)^2\le2.
\]

The rank-one identity proves the bounded-score assertion.

For Lipschitz continuity, we consider the Jacobian of $p$ with respect to
the scaled logits $W\phi(s)/\tau$, which is
\begin{equation}\label{eq:softmax-jacobian}
\operatorname{diag}(p)-pp^\top.
\end{equation}

For $v\in\R^K$, $v^\top (\operatorname{diag}(p)-pp^\top)v$ is the variance of $v_I$ when
$\Pr(I=i)=p_i$, and is nonnegative. If $K\ge2$, its range has width at most
$\max_{i,j}|v_i-v_j|\le\sqrt{2}\normtwo{v}$. Applying
\eqref{eq:scalar-variance} gives
\begin{equation}\label{eq:softmax-jacobian-bound}
0\le v^\top (\operatorname{diag}(p)-pp^\top)v\le\tfrac12\normtwo{v}^{\,2},
\qquad\normop{(\operatorname{diag}(p)-pp^\top)}\le\tfrac12.
\end{equation}

For a direction $U\in\R^{K\times d}$, differentiating the score gives
\[
D_WS_W(s,a)[U]=-
\frac{(\operatorname{diag}(p)-pp^\top)U\phi(s)\phi(s)^\top}{\tau^2}.
\]

This $K\times d$ matrix has rank at most one, so
\begin{align*}
\normnuc{D_WS_W(s,a)[U]}
&=\frac{\normtwo{(\operatorname{diag}(p)-pp^\top)U\phi(s)}\normtwo{\phi(s)}}{\tau^2}\\
&\le\frac{B^2}{2\tau^2}\normop{U}.
\end{align*}

Integrating along the line segment between the parameters gives the
Lipschitz claim. The case $K=1$ is immediate.
\end{proof}

For two free logits, $p=(1/2,1/2)$ makes the matrix in \eqref{eq:softmax-jacobian} have an eigenvalue $1/2$;
thus the Jacobian factor in \eqref{eq:softmax-constants} is attained.

If instead there is one free binary logit with the other fixed to zero,
write $w\in\R^d$ and
\[
p=\pi_w(1\mid s)=\frac{e^{w^\top\phi/\tau}}{1+e^{w^\top\phi/\tau}},\qquad\pi_w(0\mid s)=1-p.
\]

For an action $y\in\{0,1\}$, the vector score is
$(y-p)\phi/\tau$, and its derivative is
$-p(1-p)\phi\phi^\top/\tau^2$. Consequently valid Euclidean constants are
\begin{equation}\label{eq:binary-constants}
R_\pi=\frac{B}{\tau},\qquad L_\pi=\frac{B^2}{4\tau^2}.
\end{equation}

These are constants for a different parameter space and norm; they do not
replace \eqref{eq:softmax-constants} for a matrix with two free rows.
For a directly parameterized vector of $K$ logits, take $d=1$ and
$\phi=1$ in Proposition~\ref{prop:softmax}. Both matrix norms are then
Euclidean and $B=1$.

\section{Comparison and use of the assumptions}

\begin{table}[htb]
\centering
\small
\setlength{\tabcolsep}{4pt}
\renewcommand{\arraystretch}{1.45}
\begin{tabularx}{\textwidth}{@{}>{\raggedright\arraybackslash}p{1.12in}>{\raggedright\arraybackslash}Xcc@{}}
\toprule
Policy & Fixed structure / uniform bound & $R_\pi$ & $L_\pi$\\
\midrule
Truncated Gaussian & $\mu=W\phi$, $\normtwo{\phi}\le B$;
$\Sigma$ fixed; support diameter $D$
& $BD\normop{\Sigma^{-1}}$ & $\dfrac{B^2D^2\normop{\Sigma^{-1}}^{\,2}}{4}$\\
Product Cauchy & $\mu=W\phi$, $\normtwo{\phi}\le B$;
scales $\gamma_i>0$
& $B\sqrt{\sum_i\gamma_i^{-2}}$ & $\dfrac{2B^2}{\min_i\gamma_i^2}$\\
Boltzmann & Linear preference $\ip{W}{F}$;
$\normnuc{F}\le B$; finite positive carrier mass
& $\dfrac{2B}{\tau}$ & $\dfrac{B^2}{\tau^2}$\\
Softmax & $K$ linear logits $W\phi$;
$\normtwo{\phi}\le B$; finite action set
& $\dfrac{\sqrt{2}B}{\tau}$ & $\dfrac{B^2}{2\tau^2}$\\
\bottomrule
\end{tabularx}
\caption{Uniform nuclear-score bounds and operator-to-nuclear score
Lipschitz constants. Temperatures, scales, covariance, features, and
supports are independent of the learned parameter. All constants are
sufficient; only the specific sharpness statements in the text are claimed.}
\label{tab:comparison}
\end{table}

Table~\ref{tab:comparison} collects sufficient constants for the precise
parameterizations proved above. All four rows satisfy both score
requirements; none by itself verifies the remaining hypotheses of an
actor--critic convergence theorem.
For a finite action set and a learned final linear layer, softmax gives a
direct verification of both requirements. For an unbounded continuous
action space, a fixed-scale product Cauchy with linear location also
satisfies both, provided any separate reward and moment assumptions are
appropriate. For a bounded continuous action space, the conditioned
Gaussian above supplies another verification. General Boltzmann policies
fit the result when their matrix sufficient statistic has a uniform
nuclear-norm bound and their carrier has finite positive mass.
Parameter-dependent features, nonlinear neural means or logits, learned
scales or covariance, changing supports, and learned temperatures are not
covered by the displayed constants. The required additional controls
depend on the parameterization: in particular, first and second parameter
derivatives enter score regularity. Vanishing scales or temperatures can
make the constants diverge.
A trainable preceding network makes its
features parameter-dependent; the linear-layer result cannot simply be
applied to the entire network. Fixed features may, however, be computed
by a frozen network if their uniform bound is justified.
These conditions concern the score pointwise in the state and action;
they do not establish mixing, critic accuracy, or finite variance of an
arbitrary policy-gradient estimator. Any use in a convergence theorem
must check those assumptions independently.


\begin{thebibliography}{3}
\bibitem{fithian2023}
Will Fithian.
\newblock \emph{Exponential Families}.
\newblock Statistics 210A course notes, University of California, Berkeley,
August 29, 2023.
\newblock Section \emph{Differential identities}.
\newblock \url{https://www.stat.berkeley.edu/~wfithian/courses/stat210a/exponential-families.html}.

\bibitem{nist-cauchy}
National Institute of Standards and Technology.
\newblock \emph{Cauchy Distribution}.
\newblock NIST/SEMATECH e-Handbook of Statistical Methods, Section 1.3.6.6.3.
\newblock \url{https://www.itl.nist.gov/div898/handbook/eda/section3/eda3663.htm}.
\end{thebibliography}
\end{document}